\theoremstyle{definition}
\theoremstyle{remark}
\newtheorem{theorem}{Theorem}
\def\BibTeX{{\rm B\kern-.05em{\sc i\kern-.025em b}\kern-.08em
    T\kern-.1667em\lower.7ex\hbox{E}\kern-.125emX}}
\DeclareMathOperator*{\argmax}{argmax}
\begin{document}

\title{Respect Your Emotion: Human-Multi-Robot Teaming based on  Regret Decision Model\\
\thanks{The authors are with the Mechanical Engineering Department, Clemson University, Clemson, SC,  29634, USA. E-mail: \{longshj, yue6\}@g.clemson.edu. Research is supported in part by the National Science Foundation under Grant No. CMMI-1454139.}
}

\author{Longsheng~Jiang and Yue~Wang
}

\maketitle

\begin{abstract}
Often, when modeling human decision-making behaviors in the context of human-robot teaming, the emotion aspect of human is ignored. 
Nevertheless, the influence of emotion, in some cases, is not only undeniable but beneficial. 
This work studies the human-like characteristics brought by regret emotion in one-human-multi-robot teaming for the application of domain search. 
In such application, the task management load is outsourced to the robots to reduce the human's workload, freeing the human to do more important work. 
The regret decision model is first used by each robot for deciding whether to request human service, then is extended for optimally queuing the requests from multiple robots.  
For the movement of the robots in the domain search, we designed a path planning algorithm based on dynamic programming for each robot. 
The simulation shows that the human-like characteristics, namely, risk-seeking and risk-aversion, indeed bring some appealing effects for balancing the workload and performance in the human-multi-robot team.  
\end{abstract}

\section{Introduction}
A team of many low-cost small robots may often perform better than one big expensive robot. The robot team is flexible, robust, and the robots are expendable. 
However, low-cost robots do not have highly accurate sensors. 
These drawbacks can be alleviated by teaming the robots with human operators.
Due to this reason, the research in studying the interaction between one human and multiple robots has attracted much interests \cite{kolling2016human}.

One challenge in such collaboration is the balance between human workload and team performance \cite{ chen2014human}.
Humans are much better in sensing and recognition than low-cost robots.
Thus, utilizing humans can attain better task outcomes. 
However, humans are also slower, more expensive and subject to fatigue. 
Hence, human workload must be taken into consideration when designing the interaction between a human and multiple robots. 
For instance, the workload of task management can be outsourced to robots to certain extents \cite{parasuraman2000model}; an intelligent agent can be used to coordinate the group of robots \cite{chen2012supervisory}.

Because the human in the team is not only a collaborator but also a supervisor, he/she holds the responsibility of managing the robots. 
When the management load is delegated to the robots, the robots should make decisions in a way similar to that of the human. 
Thus, a quantitative model of the human's decision-making is needed.

In early days, when needing a quantitative decision-making model, humans were often approximated as optimal controllers \cite{kleinman1970optimal} or expected utility maximizers \cite{keeney1993decisions}. 
However, researchers gradually noticed the inevitable influence of emotion to decision-making \cite{marsella2009ema} and started to incorporate the effects of emotion when designing their robots \cite{velasquez1998robots}. 
Emotion is not only of affection importance, sometimes it actually helps people to make wiser decisions. 
For instance, it is the wariness of future disasters and the regret when upon the disasters that bias people toward purchasing insurances~\cite{loomes1982regret}.

Regret is important for human decision-making. 
It explains people's risk-seeking and risk-averse behaviors. 
Risk-averse happens, as in the above insurance example, when facing two options 1) of  a high cost but low occurrence probability and 2) of a small but sure cost, respectively. 
Risk-seeking happens when facing two options 1) of a moderate cost but high occurrence probability and 2) of a sure cost close to the moderate cost, respectively. 
These two types of decision-making pattern should not be considered as anomalies because they are observed in the majority of people \cite{kahneman1979prospect}. 
 
In this work, we will include the human risk-attitudes in the coordination of a human-multi-robot team in performing a domain search task. 
In the task, the human operator outsources the task management load to each robot and an intelligent agent. 
The novelty is that we use the extended versions of regret theory \cite{loomes1982regret} as the decision-making model, for the robots' decision-making and for the agent's queue-ordering.  
The intuition behind regret theory is that both the risk-seeking and risk-averse behaviors are caused by regretful emotion:
Choosing one option means forgoing the other, and you might regret that you did not choose otherwise. 
The theory is backed by, in neurology, the identification of brain regions which generates regret during decision-making \cite{camille2004involvement}, and by the quantitative measurement of regret theory~\cite{liao2017quantitative}.

We formulate the task search problem in section \ref{sec:probform}, and explain the decision-making of each robot in section \ref{sec:decision-making}.
Section \ref{sec:queue} describes how the several waiting robots are queued optimally.
Section \ref{sec:pathplanning} shows the path planning of each robot. 
We show the simulation and results in section \ref{sec:simulation_result} and conclude the work in section \ref{sec:conclusion}.

\section{Problem formulation} \label{sec:probform}
A group of target objects, with a known total amount, are scattered in a large 2-D domain with unknown locations. 
A team---comprising of multiple robots and a human operator---is tasked to find those objects. 
While the robots are deployed on field, the operator stays in the operating station. 
Robots are equipped with vision sensors, GPS and communication units.
They have direct communication with the station. 

The large domain is divided into small regions with only one robot in each. 
Each region is divided further into cells. 
The size of each cell is assumed to be large enough to contain at most one object. 
Each robot $r \in \mathcal{R}$, where $\mathcal{R}$ is the set of all robots, sweeps the cells one by one to detect if a cell contains an object. 
When a robot $r$ is at cell $x_r$, due to the limitation of its sensing capability, its detection result $Y_r$ might possibly be correct or wrong. 
Alternatively, the robot has the option to request the human to tele-operate itself; the communication channel is established for the human to see through the robot's video camera.
The human detection accuracy is superior, yet with a higher cost. 
The robot needs to choose between accepting its own detection results (option R) and requesting human service (option H). 
When multiple robots choose option H simultaneously, the station queues the requests.

The complete state of robot $r$, $(x_r,Y_r)$, describes both the cell at which the robot currently locates ($x_r$) and correctness of the current detection ($Y_r$). 
It is assumed that the robot motion control is precise enough so that $x_r$ is deterministic. 
As mentioned above, however, the detection result $Y_r$ is uncertain over two possible states: $y_c$ for correct detection and $y_w$ for wrong detection.

The prior probability distribution (belief) of $Y_r$ roots in the prior knowledge of the possible locations of the objects as well as the sensor model of robot $r$. 
The prior probability of an object present (or absent) at cell $x_r$ can be represented as $p(S_x=s_p)$ (or $p(S_x=s_a)$), where $s_p$ (or $s_a$) denotes object presence (or absence). 
For brevity, when denoting probabilities, the random variables in the parentheses are omitted, e.g., $p(S_x=s_p)=p(s_p)$. 
The observations are represented by $O=o_p$ for presence and $O=o_a$ for absence.
The sensor model for robot $r$ is described by four known probabilities: $p(o_a|s_a,r)$, $p(o_p|s_a,r)$, $p(o_p|s_p,r)$ and $p(o_a|s_p,r)$.
The probability $p_r$ for robot $r$ obtaining $Y_r=y_c$ at a cell thus is 
\begin{equation*}
p_r \triangleq p(y_c|r) = p(o_a|s_a,r)p(s_a)+p(o_p|s_p,r)p(s_p).
\label{eqn:prob_x}
\end{equation*}
The probability for robot $r$ to obtain $Y_r=y_w$ at a cell follows immediately: $p(y_w|r)= 1-p_r$. 

The variables in $(x_r,Y_r)$ have different observability as well. We assume that the localization uncertainty in GPS is negligible: $x_r$ is fully observable.
However, $Y_r$ is only partially observable.
Even after an observation at cell $x_r$, the actual state of $Y_r$ is still not known for sure, although the observation helps update the belief. 

In determining whether to request human service, two types of information should be considered: performance and cost. 
Option R has compromised performance. 
Its two possible outcomes, right or wrong, can be defined by the resulting costs. 
We consider the correct detection causing no cost, thus $c(y_c)=0$; the wrong detection however has a cost $c(y_w)<0$. 
The human service is considered to have perfect performance (always $y_c$), but it has an operational cost $c_H<0$. 
The two options are represented in Table~\ref{tb:twooption}. 
When $c_H<c(y_w)<0$, option R is certainly the choice.
However, when $c(y_w)<c_H<0$, how to balance the performance and cost becomes challenging.
The later one is the case we study. 
\begin{table}[!ht]
\renewcommand{\arraystretch}{1.3}
\caption{Two options represented in the original form}
\label{tb:twooption}
\centering
\begin{tabular}{c c c c c c c}
\multicolumn{3}{c}{\bf  Option H} & & \multicolumn{3}{c}{\bf  Option R}\\
\cmidrule{1-3} \cmidrule{5-7}
Cost:   &  \multicolumn{2}{c} { $c_H$} &  & Cost: &   $ 0$    &     $c(y_w)$  \\
Probability: & \multicolumn{2}{c} {$100\%$} &  & Probability: & $p_r$ &      $1-p_r$ \\
\cmidrule{1-3} \cmidrule{5-7}
\end{tabular}
\end{table}

The most popular, yet most straightforward, decision-making method is expected value theory. 
It calculates the net advantage of option H relative to option R as
\begin{equation}
	e_v = c_H - (1-p_r)\,c(y_w). 
\end{equation}
It chooses option H, if $e_v>0$; option R, if $e_v<0$; equally liking, otherwise. 

Multiple robots may request human service simultaneously.
When they send their requests to the intelligent agent at the station, the agent queues them into a waiting line. 
The queuing is urgency based. 
The human serves the robots one by one. 
For the team to accomplish the job, each robot needs to move in the domain. 
Thus, a method of optimal path planning is needed.

\section{Regret Theory Based Decision-making}\label{sec:decision-making}

The name of regret theory comes from its effort to account for the influence of anticipatory regretful emotion, which arises from the comparisons of costs. 
These comparisons can be highlighted by representing Table~\ref{tb:twooption} in the new form in Table~\ref{tb:twooption_comparison}.
The last two columns show the comparisons of costs, with the joint probabilities for the comparisons to occur (i.e., $p_r=1\cdot p_r$ and $1-p_r=1\cdot (1-p_r)$). 
Denote the costs of option R as vector $\mathbf{c}_R \triangleq [0, c(y_w)]$; of option H, as $\mathbf{c}_H \triangleq [c_H, c_H]$. 
We can use $\mathbf{c}_R$ and $\mathbf{c}_H$ as handles to the two options. 
\begin{table}[!th]
\renewcommand{\arraystretch}{1.3}
\caption{Two options represented in the comparative form}
\label{tb:twooption_comparison}
\centering
\begin{tabular}{c | c| c| c}
\hline
\multicolumn{2}{c|}{Joint probability} & $p_r$ & $1-p_r$ \\ \hline
\multirow{2}{*}{Cost}  & {\bf Option H:} $\mathbf{c}_H$ & $c_H$ & $c_H$ \\ \cline{2-4}

                        & {\bf Option R:} $\mathbf{c}_R$ & $0$ & $c(y_w)$ \\ \hline 
\end{tabular}
\end{table}

Mathematically, the regret influence is modelled by a function $Q(\Delta c)$, where the argument $\Delta c$ is the difference of costs in comparison. The function $Q(\Delta c)$ has some important properties \cite{loomes1982regret} that include:
\begin{enumerate}
\item[i)] $Q(\Delta c)$ is an odd function,
\item[ii)] $Q(\Delta c)$ is monotonically increasing.
\end{enumerate}

It is also shown that, in human decision-making, probabilities are subjectively biased \cite{kahneman1979prospect}, and
the evaluation of absolute costs is affected by the range of costs~\cite{kontek2017range}. 
We thus introduce probability weighting functions $w_k(p_r), k=1,2, $~\cite{quiggin1982theory} and a constant cost normalizer $c_\text{range}>0$ to the original regret theory \cite{jiang2019human}. 
The costs after normalization are in $[-1,0]$. 

The decision-making in regret theory is determined by the net advantage of one option with respect to the other option. 
In the case of Table \ref{tb:twooption_comparison}, the net advantage of $\mathbf{c}_H$ when using $\mathbf{c}_R$ as the reference is
\begin{equation}
    e_r(\mathbf{c}_H,\mathbf{c}_R) \triangleq \sum_{k=1}^2 w_k(p_r)\,Q\big(\frac{\mathbf{c}_H(k)-\mathbf{c}_R(k)}{c_\text{range}}\big),
    \label{eqn:regret_theory}
\end{equation}
where $\mathbf{c}_{(\cdot)}(k)$ is the $k$-th element in vector $\mathbf{c}_{(\cdot)}$. 
The two functions $w_1$ and $w_2$ are different but depend on a unique $w$-function such that $w_1(p_r)=w(p_r)$ and $w_2(p_r)=1-w(p_r)$, according to \cite{quiggin1982theory}.
The choice is option H---denoted as $\mathbf{c}_H \succ \mathbf{c}_R$---if $e_r(\mathbf{c}_H,\mathbf{c}_R)>0$; option R ($\mathbf{c}_H \prec \mathbf{c}_R$) if $e_r(\mathbf{c}_H,\mathbf{c}_R)<0$; equally liking ($\mathbf{c}_H \sim \mathbf{c}_R$), otherwise. 
There, the $Q$-function 
and the $w$-function are individual specific and can be measured \cite{jiang2019human}.


In this work, we use the  functional forms for the the $Q$-function and the $w$-function. 
These analytical expressions agree with the empirical data in the literature. 
The $Q$-function \cite{liao2017quantitative} and the $w$-function \cite{prelec1998probability} are defined as,
\begin{IEEEeqnarray}{rcl}
	Q(\Delta	 c) & \triangleq  & \alpha_1 \, \sinh(\alpha_2 \, \Delta c) + \alpha_3\,\Delta c,\label{eqn:Q-function}\\
 w(p_r) & \triangleq & \exp \big(-\beta_1\big(-\log(p_r)\big)^{\beta_2}\big), \label{eqn:w-function}
\end{IEEEeqnarray}
where $\alpha_1,\;\alpha_2,\;\alpha_3>0$ and $\beta_1,\;\beta_2>0$ are parameters specific to individuals and can be estimated with data.   

\section{Regret theory Based Queue-ordering}\label{sec:queue}
It is often the case that multiple robots choose option H simultaneously. 
However, the operator can only serve one robot at a time; the robots need to form a waiting line. 
An ordered waiting line can be represented by a permutation.
Suppose $N$ robots are choosing option H. 
The number of permutations is roughly about $N!$ (some service requests may be rejected).
The $N$ robots form a set $\mathcal{R}_H$. 
Regarding robot $r \in \mathcal{R}_H$, the intelligent agent needs to decide either to reject its request or to designate to it a line position $n \in \{1,2,\ldots,M\}$, and $M \leq N$ because of possible service rejections. 
Thus, for the intelligent agent, there are $M+1$ options for robot $r$, which are defined in set $\mathcal{C}_r\triangleq \{ \mathbf{c}_{H_1}, \ldots, \mathbf{c}_{H_M},\mathbf{c}_{R} |\,p_r \}$, as in Table \ref{tb:manyoption_comparison}. 
\begin{table}[t!]
\renewcommand{\arraystretch}{1.3}
\caption{Multiple options in Comparative form for Robot $r$}
\label{tb:manyoption_comparison}
\centering
\begin{tabular}{c | l| c| c}
\hline
\multicolumn{2}{c|}{Joint probability} & $p_r$ & $1-p_r$ \\ \hline
\multirow{4}{*}{Cost} & {\bf Option H$_1$:} $\mathbf{c}_{H_1}$ & $c_{H_1}$ & $c_{H_1}$ \\ \cline{2-4}
                      & \hspace{0.5cm}$\vdots$       & $\vdots$ & $\vdots$ \\ \cline{2-4}
                      & {\bf Option H$_M$:} $\mathbf{c}_{H_M}$ & $c_{H_M}$ & $c_{H_M}$ \\ \cline{2-4}
                      & {\bf Option R:} $\mathbf{c}_{R}$ & $0$ & $c(y_w)$ \\ \hline 
\end{tabular}
\end{table}
The cost part $\{  \mathbf{c}_{H_1}, \ldots, \mathbf{c}_{H_M}, \mathbf{c}_{R}\}$ is independent to robot $r$, because cost vectors $\mathbf{c}_{H_n}$ depend on line position $n$, and $\mathbf{c}_R$ on cost $c(y_w)$ only; set $\mathcal{C}_r$ relates to robot $r$ only because of probability $p_r$.
As robot $r$ is further down the line, the waiting time is longer, cost $c_{H_n}$ is more negative, i.e., $c_{H_1}>c_{H_2}>\ldots>c_{H_M}$. 
Then, we have the following result. 

\begin{theorem}\label{lem:1certain1uncertain_monotonicity}
   For one certain option $\mathbf{c}_{H_n}$ and one uncertain option $\mathbf{c}_{R}$, the net advantage $e_r(\mathbf{c}_{H_n}, \mathbf{c}_{R})$ is monotonically increasing with respect to scalar cost $c_{H_n}$.
\end{theorem}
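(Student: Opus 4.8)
The plan is to expand the net-advantage functional \eqref{eqn:regret_theory} for this particular pair of options and recognize it as a nonnegative combination of monotone functions of $c_{H_n}$. First I would substitute the cost vectors $\mathbf{c}_{H_n} = [c_{H_n}, c_{H_n}]$ and $\mathbf{c}_R = [0, c(y_w)]$ into \eqref{eqn:regret_theory}, together with the identities $w_1(p_r) = w(p_r)$ and $w_2(p_r) = 1 - w(p_r)$, to obtain
\begin{equation*}
e_r(\mathbf{c}_{H_n}, \mathbf{c}_R) = w(p_r)\, Q\!\left(\tfrac{c_{H_n}}{c_\text{range}}\right) + \big(1 - w(p_r)\big)\, Q\!\left(\tfrac{c_{H_n} - c(y_w)}{c_\text{range}}\right).
\end{equation*}
This isolates the dependence on $c_{H_n}$: it enters only inside the two $Q$ evaluations, while $p_r$, $c(y_w)$ and $c_\text{range}$ are held fixed.

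Next I would check the signs of the two coefficients. Since the Prelec weighting function \eqref{eqn:w-function} maps $(0,1)$ into $(0,1)$, both $w(p_r)$ and $1 - w(p_r)$ are nonnegative (in fact strictly positive for $p_r \in (0,1)$). Then, because $c_\text{range} > 0$, each inner argument is an increasing affine function of $c_{H_n}$, so composing with $Q$, which is monotonically increasing by property (ii), shows that $c_{H_n} \mapsto Q(c_{H_n}/c_\text{range})$ and $c_{H_n} \mapsto Q((c_{H_n} - c(y_w))/c_\text{range})$ are both monotonically increasing. A nonnegative-weighted sum of monotonically increasing functions is monotonically increasing, which yields the claim.

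For the strict version I would instead differentiate using the explicit form \eqref{eqn:Q-function}: since $Q'(\Delta c) = \alpha_1 \alpha_2 \cosh(\alpha_2 \Delta c) + \alpha_3 > 0$ for all $\Delta c$ (using $\alpha_1, \alpha_2, \alpha_3 > 0$), the chain rule gives
\begin{equation*}
\frac{\partial e_r(\mathbf{c}_{H_n}, \mathbf{c}_R)}{\partial c_{H_n}} = \frac{w(p_r)}{c_\text{range}}\, Q'\!\left(\tfrac{c_{H_n}}{c_\text{range}}\right) + \frac{1 - w(p_r)}{c_\text{range}}\, Q'\!\left(\tfrac{c_{H_n} - c(y_w)}{c_\text{range}}\right) > 0 .
\end{equation*}
There is no real obstacle in this argument; the only point needing a line of justification is the nonnegativity of the weights, which follows immediately from the range of the probability-weighting function, and (for strictness) the positivity of $Q'$, which follows from the chosen functional form. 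This monotonicity of $e_r$ in $c_{H_n}$ is exactly what will later let the agent reason about how a robot's line position shifts its preference between options H and R.
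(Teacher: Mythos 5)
Your proposal is correct and follows essentially the same route as the paper: substitute $\mathbf{c}_{H_n}=[c_{H_n},c_{H_n}]$ into Eqn.~(\ref{eqn:regret_theory}), observe that the weights $w_1(p_r)=w(p_r)$ and $w_2(p_r)=1-w(p_r)$ are positive and $c_\text{range}>0$, and conclude from the monotonicity of $Q$ (the paper differentiates and invokes $Q'>0$ from property (ii), exactly as in your strict version). Your additional derivative-free composition argument and the explicit computation of $Q'$ from Eqn.~(\ref{eqn:Q-function}) are harmless refinements of the same idea, not a different proof.
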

\begin{proof}
    Substitute $\mathbf{c}_{H_n}=[c_{H_n},c_{H_n}]$ for $\mathbf{c}_H$ in Eqn. (\ref{eqn:regret_theory}) and take derivative with respect to $c_{H_n}$, we have 
    \begin{equation*}
        \frac{\partial e_r}{\partial c_{H_n}}(\mathbf{c}_{H_n}, \mathbf{c}_{R}) =\frac{1}{c_\text{range}}\sum_{k=1}^2 w_k(p_r)\,Q'\big(\frac{c_{H_n}-\mathbf{c}_R(k)}{c_\text{range}}\big).
    \end{equation*}
   Because of $Q'(\cdot)>0$ from property (ii), the weighting function $w_k(p_r)>0$, $k=1,2$, for any $p_r \neq 0$, and $c_\text{range}>0$, the derivative $\frac{\partial e_r}{\partial c_{H_n}}>0$ for any $c_{H_n}$. 
\end{proof}

Moreover, in an option set with more than two options, what matters is not only which option is superior, but how much one option is better than another.  
To enable the comparison in magnitudes, a common reference option should be chosen against which the net advantage of each option in the set is computed. 
We choose $\mathbf{c}_R$ in $\mathcal{C}_r$ as such reference because option $\mathbf{c}_R$ is the ``bottom line'' of $\mathcal{C}_r$; robot $r$ opts for option H---we call $\mathbf{c}_{H_n}$, $n=1,\ldots, M$, all as option H---if and only if $\mathbf{c}_{H_n} \substack{ \succ \\ \sim} \mathbf{c}_{R}$.

 
We use an augmented permutation matrix $\mathbf{P}$ for recording the order of the $N$ robots in a line of length $M$; whenever a robot is rejected of service (choosing $\mathbf{c}_R$), its position is set as $M+1$.  
The accumulated net advantage function of $\mathbf{P}$ can be defined as
\begin{equation}
	G(\mathbf{P}) \triangleq \sum_{r\in \mathcal{R}_H}e_r(\mathbf{c}_r,\mathbf{c}_{R}),
	\label{eqn:value_permutation}
\end{equation}
where $\mathbf{c}_r \in \mathcal{C}_r$ and $c_R$ is the option R for robot $r$. 
It has the constraint that if $\mathbf{c}_{r_1}=\mathbf{c}_{H_m}$ and $\mathbf{c}_{r_2}=\mathbf{c}_{H_n}$ for any two robots in $\mathcal{R}_H$, respectively, then $m \neq n$. 

We then can find the optimal $\mathbf{P}^*$ in the space $\mathcal{P}$ of $\mathbf{P}$,
\begin{equation}
	\mathbf{P}^* = \argmax_{\mathbf{P} \in \mathcal{P}}\, G(\mathbf{P}). 
	\label{eqn:max_permutation}
\end{equation}
Each of the optimal queues is a sequence such that the robots in urgency are served first, and the urgency depends on how fast their net advantages decrease.  
To find the exact optima, it involves enumerating all $\mathbf{P}$ in $\mathcal{P}$. 
When $N$ grows large, $|\mathcal{P}|\approx N!$, making it difficult to compute $\mathbf{P}^*$ in real-time. 
Hence, we use a fast heuristic algorithm to approximate the optimal solution.  

Recall $c_{H_n}$ denotes the cost of waiting at line position $n$.
Forming a line consists of selecting a robot for each position $n=M, M-1, \ldots, 1$.   
Algorithm~\ref{algm:optimiztion_permutation} describes a simple mechanism for such a selection. 
\begin{algorithm}[H]
\caption{Optimization for the Approximated Model}\label{algm:optimiztion_permutation}
\begin{algorithmic}[1]
\renewcommand{\algorithmicrequire}{\textbf{Input:}}
\renewcommand{\algorithmicensure}{\textbf{Output:}}
\REQUIRE $\mathcal{R}_H$
\ENSURE $\mathbf{P}$
\STATE{Initialization: $M \leftarrow 0$, $\hat{\mathcal{R}}_H \leftarrow \mathcal{R}_H$, and $N_q \leftarrow |\mathcal{R}_H|$}
\WHILE{$M<N_q$}
	\STATE{$M \leftarrow M+1$ and $N_q \leftarrow 0$}
	\FOR{$r \in \hat{\mathcal{R}}_H$}
		\IF{$e_r(\mathbf{c}_{H_M}, \mathbf{c}_{R}) > 0$}
		\STATE{$N_q \leftarrow N_q + 1$}
		\ENDIF
	\ENDFOR
\ENDWHILE
\FOR{$r \in \hat{\mathcal{R}}_H$}
	\IF{$e_r(\mathbf{c}_{H_M}, \mathbf{c}_{R}) \leq 0$}
		\STATE{Reject service to robot $r$.}
		\STATE{Save this choice for robot $r$ to $\mathbf{P}$}
		\STATE{Pruning: $\hat{\mathcal{R}}_H \leftarrow \hat{\mathcal{R}}_H \setminus r$}
	\ENDIF
\ENDFOR
\FOR{$n=M$ counting down to $n=1$}
	\STATE{Select a robot $r$ at position $n$ according to Eqn.~(\ref{eqn:approximate_optimization})}
	\STATE{Save the determined position $n$ of robot $r$ to $\mathbf{P}$}
	\STATE{Pruning: $\hat{\mathcal{R}}_H \leftarrow \hat{\mathcal{R}}_H \setminus r$}
\ENDFOR
\end{algorithmic}
\end{algorithm}

In Algorithm \ref{algm:optimiztion_permutation}, lines 2--9 determine the length $M$ of the queue such that $e_r(\mathbf{c}_{H_n}, \mathbf{c}_{R}) > 0$, $n \in \{1,2,\ldots,M\}$ for the robots. 
If, because of waiting, the net advantage becomes $e_r(\mathbf{c}_{H_n}, \mathbf{c}_R)\leq 0$ for some $r$, then the robot should opt for option R, hence $M \leq N$. 
Lines 10--15 reject serving the robots which are determined better to choose option R. 
Lines 16--20 proceed to select a robot $r$ at position $n$ from $n=M$ to $n=1$, using  
\begin{equation}
 r^* = \argmax_{r\in \hat{\mathcal{R}}_H}\, \Delta e_r(\mathbf{c}_{H_{n}}, \mathbf{c}_{H_{n+1}}),
	\label{eqn:approximate_optimization}
\end{equation}
and $\Delta e_r(\mathbf{c}_{H_{n}}, \mathbf{c}_{H_{n+1}}) \triangleq e_r(\mathbf{c}_{H_{n+1}}, \mathbf{c}_R)-e_r(\mathbf{c}_{H_{n}}, \mathbf{c}_R)$.
Since in the line, $\mathbf{c}_{H_{n}}$ only exists for $n\in \{1,\ldots,M\}$, the artificial $\mathbf{c}_{H_{M+1}}$ is defined as $\mathbf{c}_{H_{M+1}} \triangleq \mathbf{c}_{R}$.   
It is important that this algorithm works backward.
Due to Theorem \ref{lem:1certain1uncertain_monotonicity}, $\Delta e_r(\mathbf{c}_{H_{n}}, \mathbf{c}_{H_{n+1}})<0$.
What Eqn. (\ref{eqn:approximate_optimization}) does is to find the $\Delta e_r$ which is closest to zero, meaning the change is not rapid and the robot is less urgent.
We thus put this robot toward the end of the waiting line.

\section{Path Planning}\label{sec:pathplanning}
This section focuses on the path planning of one robot, thus notation $r$ is omitted without loss of specificity.

In this work, we employ two strategies for the searching.
First, we use a sweeping strategy: 
Each cell is visited only once, and the search ends as soon as all the objects are found. 
This strategy is efficient: It avoids repetitive and complete coverage of the domain.
Its drawback, however, is that there is no second chance for each detection. 
Second, we adopt modular design for the decision-making and the path planning.
Its virtue is allowing us to focus on developing and improving the two modules independently. 
The price is the disregard for the vexing influence between the two modules, which we will address in our future work.
 

The state of the robot is $(x,Y)$. 
We model the state transition of the robot as in \figurename~\ref{fig:MOMDP}. 
\begin{figure}[!b]
\centering
\includegraphics[width=8cm]{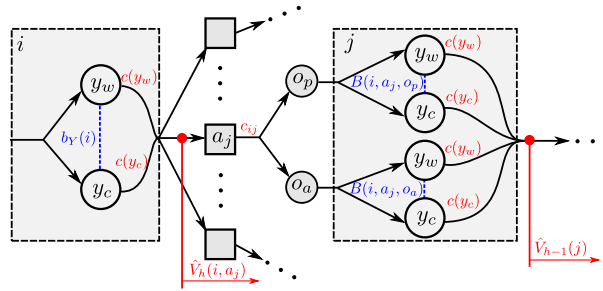}    
\caption{ The state transition of the robot moving from cell $i$ to $j$.}  
\label{fig:MOMDP}             
\end{figure}
At $x=i$, the robot is uncertain if its detection state is $y_w$ or $y_c$. 
The uncertainty is denoted with a belief $b_Y(i)$. 
The robot acts by choosing the next cell among all unvisited cells. 
With a candidate action $a_j$, the robot will move to $x=j$.
It will receive an observation, $o_p$ or $o_a$, at cell $j$.
Based on the observation, the robot will update its belief of detection states at cell $j$, denoted as $B(i, a_j, o_p)$ or $B(i,a_j,o_a)$, respectively. 
Then, the robot will further select another next cell. 
Depending on the detection state the robot is in and the action it selects, different costs apply, denoted as $c(y_w)$, $c(y_c)$ and $c_{ij}$.
Cumulative costs $\hat{V}_h$ are accumulated from the indicated locations to the end of the planning horizon, where $h$ is the length to end of the horizon.
We assume that for any two cells $i$ and $j$, belief $b_Y(i)$ and $b_Y(j)$ are independent. 
The optimal policy at cell $i$ can be obtained through dynamical programming:
\begin{equation*}
    \hat{V}_h(i)=\max_{a_j \in \mathcal{A}}\Big[\Big(c_{ij}+\sum_{y_j\in \mathcal{Y}}p(y_j)\,c(y_j)\Big)+\hat{V}_{h-1}(j)\Big],
    \label{eqn:thm1}
\end{equation*}
where set $\mathcal{A}$ contains the unvisited cells, $\mathcal{Y} \triangleq \{y_w, y_c\}$.

\section{Simulation and Results}\label{sec:simulation_result}
We simulated a team consisting of 1 human and 10 robots, as in \figurename~\ref{fig:problem_formulation}. 
Each robot was assigned to a region with 10 by 10 cells. 
In total there were 1000 cells. 
Each cell had a prior probability of containing an object that was drawn randomly from interval $[0, 0.2]$. 
In total there were $100$ cells containing objects. 

For the regret decision model, we used the parameters of subject 12 in our previous work~\cite{jiang2019human}.  
\begin{figure}[!th]
\centering
\includegraphics[width=7.5cm]{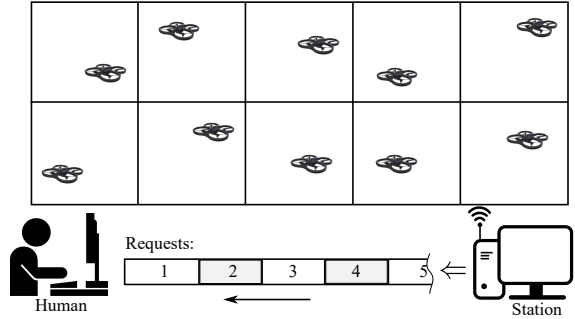}    
\caption{The collaboration of a human and multi-robot search team.}  
\label{fig:problem_formulation}                                                          
\end{figure}

We simulated the search task in two conditions.
In the condition of high sensor accuracy and low human cost, we set the sensor model of each robot as $p(o_a|s_a)=p(o_p|s_p)=0.7$. 
These sensors were moderate because they were on low-cost robots.  
The observations in cells were simulated once randomly using the prior distribution and the sensor model, and then saved for the following simulations. 
Thus, we had the same simulation environment to compare regret theory and expected value theory. 
The cost of being wrong (a miss or a false alarm) was $c(y_w)=-30$. 
The cost of being correct (a hit or a correct rejection) with human service was relatively low, $c_{H_n}=-1.5\,n$, where $n$ was the line position assigned by the intelligent agent.  
As a contrast, in the condition of low sensor accuracy and high human cost, we set the sensor model as $p(o_a|s_a)=p(o_p|s_p)=0.5$ to simulate severe situations.
In this case, the wrong detection cost was still $c(y_w)=-30$ but the human service cost was set at $c_{H_n}=-10\,n$.

\figurename~\ref{fig:WaitingLine} shows the difference between queue-ordering with the regret decision model and the expected value decision model under the high sensor accuracy low human cost condition.   
As seen, the waiting line based on regret theory is longer than based on expected value theory. 
It indicates that more robots decided to request human service under the regret decision model, comparing with expected value theory. 
In this scenario, the robots with the regret decision model became risk-averse. 
The intelligent agent was also risk-averse because it retained a longer line. 
Also, the queue-ordering was dynamic.
At time $t$ there were 7 robots waiting under the regret decision model. 
In the simulation we set that the human operator could process 3 requests each time step. 
Thus, at time $t+1$, the length of the waiting line reduced to 4. 
At time $t+2$, however, there were new requests from the robots.
These requests did not necessarily queue at the end of the line. 
Depending on their urgency, they could be inserted to the front; In \figurename~\ref{fig:WaitingLine}, they were inserted before the request from robot 1. 
\begin{figure}[!th]
\centering
\includegraphics[width=8cm]{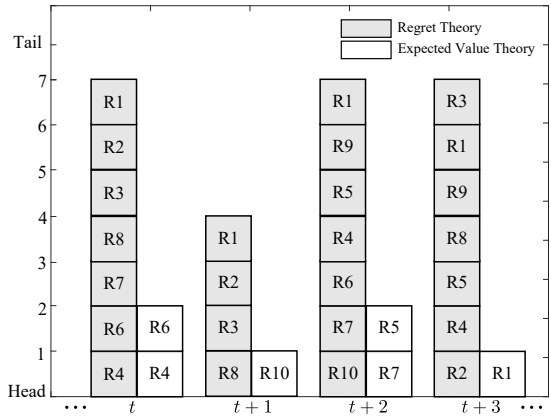}    
\caption{ The comparison between the waiting lines formed according to regret theory and expected value theory, respectively, under the high sensor accuracy low human cost condition.}  
\label{fig:WaitingLine}             
\end{figure}

The performance of the two decision-making models can be further shown in Table~\ref{tb:HighAccLowCost}. 
In the high senor accuracy low human cost condition, robot detection performed well and the operator was fresh thus was of low costs. 
The accuracy of detection, indicated by the percentage of objects found, was much higher for the regret decision model than the expected value decision model. 
The improvement of accuracy was due to the risk-averse attitude, showing by the willingness to accept more human service requests and longer task duration.   
\begin{table}[ht!]
\renewcommand{\arraystretch}{1.3}
\caption{Teaming Performance (High Sensor Accuracy Low Human Cost)}
\label{tb:HighAccLowCost}
\centering
\begin{tabular}{c c c}
\hline
		&Regret Theory & Expected Value  \\
\hline
Average queue length: & 5.0 &		2.3		\\
Percentage of objects found: &  100\%	&		66\% 		\\
Number of human services: &  970 	&		349	\\
Task duration (steps): & 385 &	168			\\
\hline
\end{tabular}
\end{table}

Opposite to \figurename~\ref{fig:WaitingLine} and Table~\ref{tb:HighAccLowCost}, when under the low sensor accuracy high human cost condition, the waiting lines were shorter for the regret decision model, see Table~\ref{tb:LowAccHighCost}.   
In this severe situation, robot detection performed poorly and the operator was very tired thus was costly in providing service. 
Shown by the comparisons in the percentage of objects found and the numbers of human services, the queuing based on the regret decision model became risk-seeking: It did not want to provide service for only a small improvement in accuracy. 
\begin{table}[t!]
\renewcommand{\arraystretch}{1.3}
\caption{Teaming Performance ( Low Sensor Accuracy High Human Cost)}
\label{tb:LowAccHighCost}
\centering
\begin{tabular}{c c c}
\hline
		&Regret Theory & Expected Value \\
\hline
Average queue length: & 0.8 &		1.84	\\
Percentage of objects found: &  15\%	&		19\% 		\\
Number of human services: &  44 	&		134	\\
Task duration (steps): & 	58 &	72			\\
\hline
\end{tabular}
\end{table}

From the human operator's perspective, the decisions made by the regret decision model are more acceptable. 
When the human is fresh, he/she tends to, and is able to, avoid additional cost in the task by providing extra effort.
However, when the human is tired, the ergonomics becomes important. 
Even though the human provided extra effort, the outcomes in the task would only improve marginally. 
A wiser decision thus is to just save effort. 

In \figurename~\ref{fig:PathPlanning}, we show the planned path of one robot using dynamic programming with receding horizon. 
The grid of this region is 20 by 20, because we want to show a longer planning horizon. 
The sensor model used here was  $p(o_a|s_a)=0.7$ and $p(o_p|s_p)=0.9$.
The transition cost was proportional to the distance $d$ between cells and was set as $-2\,d$.   
The gray scale in each cell indicates the expected local cost within the cell. 
As shown, the planned path was a compromise between the local cost and the transition cost: The robot would visit the cell with lowest expected local cost but within its vicinity.  
\begin{figure}[!th]
\centering
\includegraphics[width=7.5cm]{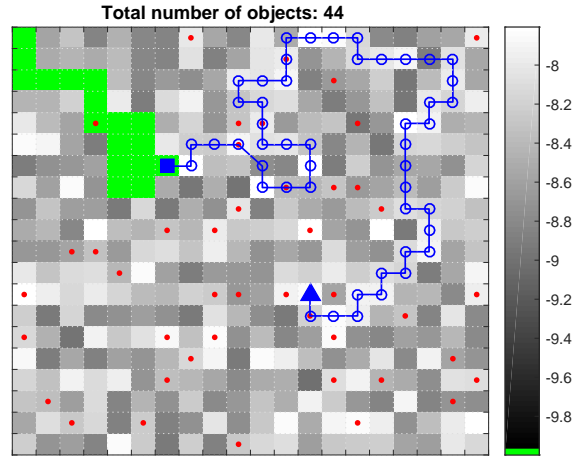}  
\caption{The planned path of one robot in its assigned region is shown. The square and the triangle indicate the current location and the end of the planning horizon, respectively. The green region has been visited. The dots indicate the true locations of the objects that are unknown to the robot. The bar indicates the expected local cost in each cell. The length of the planning horizon is 50.}
\label{fig:PathPlanning}             
\end{figure}

\section{Conclusion}\label{sec:conclusion}
Considering the influence of regret emotion in a robotic decision-making model is important, as it will bring more human-like characteristics, such as risk-seeking and risk-averse. 
We incorporated the decision model based on regret theory into the coordination of a team with one human and multiple robots.
The coordination included robot decision-making, queue-ordering and path planning. 
The simulated results show that the human-like decision model does bring appealing effects to the human-robot teaming. 

\bibliographystyle{IEEEtran}
\bibliography{IEEEabrv,Reference}

\begin{thebibliography}{10}
\providecommand{\url}[1]{#1}
\csname url@samestyle\endcsname
\providecommand{\newblock}{\relax}
\providecommand{\bibinfo}[2]{#2}
\providecommand{\BIBentrySTDinterwordspacing}{\spaceskip=0pt\relax}
\providecommand{\BIBentryALTinterwordstretchfactor}{4}
\providecommand{\BIBentryALTinterwordspacing}{\spaceskip=\fontdimen2\font plus
\BIBentryALTinterwordstretchfactor\fontdimen3\font minus
  \fontdimen4\font\relax}
\providecommand{\BIBforeignlanguage}[2]{{%
\expandafter\ifx\csname l@#1\endcsname\relax
\typeout{** WARNING: IEEEtran.bst: No hyphenation pattern has been}%
\typeout{** loaded for the language `#1'. Using the pattern for}%
\typeout{** the default language instead.}%
\else
\language=\csname l@#1\endcsname
\fi
#2}}
\providecommand{\BIBdecl}{\relax}
\BIBdecl

\bibitem{kolling2016human}
A.~Kolling, P.~Walker, N.~Chakraborty, K.~Sycara, and M.~Lewis, ``Human
  interaction with robot swarms: A survey,'' \emph{IEEE Transactions on
  Human-Machine Systems}, vol.~46, no.~1, pp. 9--26, 2016.

\bibitem{chen2014human}
J.~Y. Chen and M.~J. Barnes, ``Human--agent teaming for multirobot control: A
  review of human factors issues,'' \emph{IEEE Transactions on Human-Machine
  Systems}, vol.~44, no.~1, pp. 13--29, 2014.

\bibitem{parasuraman2000model}
R.~Parasuraman, T.~B. Sheridan, and C.~D. Wickens, ``A model for types and
  levels of human interaction with automation,'' \emph{IEEE Transactions on
  systems, man, and cybernetics-Part A: Systems and Humans}, vol.~30, no.~3,
  pp. 286--297, 2000.

\bibitem{chen2012supervisory}
J.~Y. Chen and M.~J. Barnes, ``Supervisory control of multiple robots: Effects
  of imperfect automation and individual differences,'' \emph{Human Factors},
  vol.~54, no.~2, pp. 157--174, 2012.

\bibitem{kleinman1970optimal}
D.~L. Kleinman, S.~Baron, and W.~Levison, ``An optimal control model of human
  response part i: Theory and validation,'' \emph{Automatica}, vol.~6, no.~3,
  pp. 357--369, 1970.

\bibitem{keeney1993decisions}
R.~L. Keeney and H.~Raiffa, \emph{Decisions with multiple objectives:
  preferences and value trade-offs}.\hskip 1em plus 0.5em minus 0.4em\relax
  Cambridge university press, 1993.

\bibitem{marsella2009ema}
S.~C. Marsella and J.~Gratch, ``Ema: A process model of appraisal dynamics,''
  \emph{Cognitive Systems Research}, vol.~10, no.~1, pp. 70--90, 2009.

\bibitem{velasquez1998robots}
J.~D. Vel{\'a}squez, ``When robots weep: emotional memories and
  decision-making,'' in \emph{AAAI/IAAI}, 1998, pp. 70--75.

\bibitem{loomes1982regret}
G.~Loomes and R.~Sugden, ``Regret theory: An alternative theory of rational
  choice under uncertainty,'' \emph{The economic journal}, vol.~92, no. 368,
  pp. 805--824, 1982.

\bibitem{kahneman1979prospect}
D.~Kahneman and A.~Tversky, ``Prospect theory: An analysis of decision under
  risk,'' \emph{Econometrica}, vol.~47, no.~2, pp. 263--292, 1979.

\bibitem{camille2004involvement}
N.~Camille, G.~Coricelli, J.~Sallet, P.~Pradat-Diehl, J.-R. Duhamel, and
  A.~Sirigu, ``The involvement of the orbitofrontal cortex in the experience of
  regret,'' \emph{Science}, vol. 304, no. 5674, pp. 1167--1170, 2004.

\bibitem{liao2017quantitative}
Z.~Liao, L.~Jiang, and Y.~Wang, ``A quantitative measure of regret in
  decision-making for human-robot collaborative search tasks,'' in
  \emph{American Control Conference (ACC), 2017}.\hskip 1em plus 0.5em minus
  0.4em\relax IEEE, 2017, pp. 1524--1529.

\bibitem{kontek2017range}
K.~Kontek and M.~Lewandowski, ``Range-dependent utility,'' \emph{Management
  Science}, vol.~64, no.~6, pp. 2812--2832, 2017.

\bibitem{quiggin1982theory}
J.~Quiggin, ``A theory of anticipated utility,'' \emph{Journal of Economic
  Behavior \& Organization}, vol.~3, no.~4, pp. 323--343, 1982.

\bibitem{jiang2019human}
L.~Jiang and Y.~Wang, ``A human-computer interface design for quantitative
  measure of regret theory,'' \emph{IFAC-PapersOnLine}, vol.~51, no.~34, pp.
  15--20, 2019.

\bibitem{prelec1998probability}
D.~Prelec, ``The probability weighting function,'' \emph{Econometrica}, pp.
  497--527, 1998.

\end{thebibliography}

\end{document}